\let\NAT@parse\undefined
\definecolor{mydarkblue}{rgb}{0,0.08,0.45}
\definecolor{mydarkgreen}{RGB}{0, 139, 69}
\definecolor{mygreen2}{RGB}{0 205 0}
\definecolor{mybrown}{RGB}{139 69 19}
\newcommand{\method}{SafeDPA\xspace}
\newtheorem{theorem}{Theorem}
\newtheorem{asm}{Assumption}
\Crefname{asm}{Assumption}{Assumption}
\newcommand{\R}[0]{\mathbb{R}\xspace}
\newcommand{\Fix}[0]{$\text{Fix-}\alpha$\xspace}
\newcommand{\RMAP}[0]{$\text{RMA-P}\beta$\xspace}
\title{\LARGE \bf
Safe Deep Policy Adaptation
}
\author{Wenli Xiao$^{*}\thanks{* These authors contributed equally to this work.}$, Tairan He$^{*}$, John Dolan, and Guanya Shi 
\thanks{The authors are with the Robotics Institute, Carnegie Mellon University, USA. {\tt\small\{wxiao2, tairanh, jdolan, guanyas\}@andrew.cmu.edu}.}
\thanks{Paper website: \href{https://sites.google.com/view/safe-deep-policy-adaptation}{https://sites.google.com/view/safe-deep-policy-adaptation}}
}
\begin{document}

\maketitle
\thispagestyle{empty}
\pagestyle{empty}

\begin{abstract}
A critical goal of autonomy and artificial intelligence is enabling autonomous robots to rapidly adapt in dynamic and uncertain environments.
Classic adaptive control and safe control provide stability and safety guarantees but are limited to specific system classes. In contrast, policy adaptation based on reinforcement learning (RL) offers versatility and generalizability but presents safety and robustness challenges. 
We propose \method, a novel RL and control framework that simultaneously tackles the problems of policy adaptation and safe reinforcement learning. \method jointly learns adaptive policy and dynamics models in simulation, predicts environment configurations, and fine-tunes dynamics models with few-shot real-world data. A safety filter based on the Control Barrier Function (CBF) on top of the RL policy is introduced to ensure safety during real-world deployment.
We provide theoretical safety guarantees of \method and show the robustness of \method against learning errors and extra perturbations.
Comprehensive experiments on (1) classic control problems
(Inverted Pendulum), (2) simulation benchmarks (Safety
Gym), and (3) a real-world agile robotics platform (RC
Car) demonstrate great superiority of \method in both safety and task performance, over state-of-the-art baselines. 
Particularly, \method demonstrates notable generalizability, achieving a 300\% increase in safety rate compared to the baselines, under unseen disturbances in real-world experiments.
\end{abstract}

\section{INTRODUCTION}
Adaptation is a crucial aspect of autonomous systems, particularly when confronted with dynamic and uncertain environments~\cite{aastrom2013adaptive,slotine1987adaptive,lavretsky2012robust,o2022neural,huang2023datt,hanover2021performance,kumar2021rma}. In these systems, robots must continually adjust their behavior and decision-making strategies to effectively respond to changing conditions.
Adaptation in the context of control systems, often referred to as \textit{adaptive control}~\cite{aastrom2013adaptive}, is a fundamental concept in control theory.
Classic adaptive control approaches the adaptation problem by continually adjusting the control system parameters to maintain or improve its performance in the presence of changing dynamics, uncertainties, or disturbances~\cite{parks1966liapunov,khosla1985parameter,ioannou1996robust}.
However, the applicability of classic adaptive control methods is often constrained by their dependence on analytical representations of system dynamics and specific system classes (e.g., systems with linear parametric uncertainties). 
These limitations restrict their utility in more complex systems.
In the field of \textit{reinforcement learning} (RL)~\cite{kaelbling1996reinforcement}, policy adaptation operates in a more general setting, often 
 without explicit system models, but it shares the core concept of adaptation~\cite{zhao2020sim}.
Successful policy adaptation~\cite{kumar2021rma,lee2020learning,tan2018sim,hwangbo2019learning,peng2020learning,huang2023datt} empowers autonomous systems to excel in a wide range of tasks and environments, making them more versatile and capable of handling real-world challenges effectively.

Though RL policy adaptation has achieved remarkable progress in autonomous systems, safety remains a critical hurdle that limits its application in real-world control tasks, as any system must ensure that its actions do not lead to harmful consequences for itself or its surroundings.
Existing end-to-end safe RL methods~\cite{liang2018accelerated,tessler2018reward,achiam2017constrained} 
are hampered in real-world deployment due to the lack of safety guarantees.
Hierarchical safe RL methods that provide theoretical safety guarantees~\cite{zhao2021model,cheng2019end} are susceptible to breaking when confronted with modest changes in dynamics or disturbances. 
To the best of our knowledge, there is no prior work that jointly tackles policy adaptation and safe reinforcement learning with safety guarantees.

In this paper, we propose \method, a general RL and control framework that enables rapid policy adaptation in dynamic environments with safety guarantees. \method (as illustrated in \Cref{fig:method-framework}) first learns a policy and a control-affine dynamics model, both conditioned on environment configurations, in simulation. Subsequently, \method learns an adaption module that predicts the environment configuration using historical data in simulation. 
Finally, in real-world deployments, \method first fine-tunes the pre-trained dynamics and adaption module, using scant few-shot data. \method then deploys the adaptive policy with a safety filter, based on the Control Barrier Function (CBF), ensuring safety is upheld. Our contributions are:
\begin{itemize}
    \item To the best of our knowledge, \method is the first framework that jointly tackles policy adaptation and safe reinforcement learning. 
    \item Under mild assumptions, we provide theoretical safety guarantees on \method. Further, we show the robustness of \method against learning errors and extra perturbations, which also motivates and guides the fine-tuning phase of \method.
    \item We conduct comprehensive experiments to evaluate \method and state-of-the-art baseline methods in (1) a classic control problem \textit{(Inverted Pendulum)}, (2) safe RL simulation benchmarks \textit{(Safety Gym)}, and (3) a real-world agile robotics platform \textit{(RC Car)}.
    Particularly, \method demonstrates notable generalizability, achieving a 300\% increase in safety rate compared to the baselines, under unseen disturbances in real-world experiments, as illustrated in \Cref{fig:RC-Car-Demo} and \Cref{TABLE:Real-Car}.
\end{itemize}

\begin{figure*}[t]
    \centering
    \includegraphics[width=0.9\textwidth]{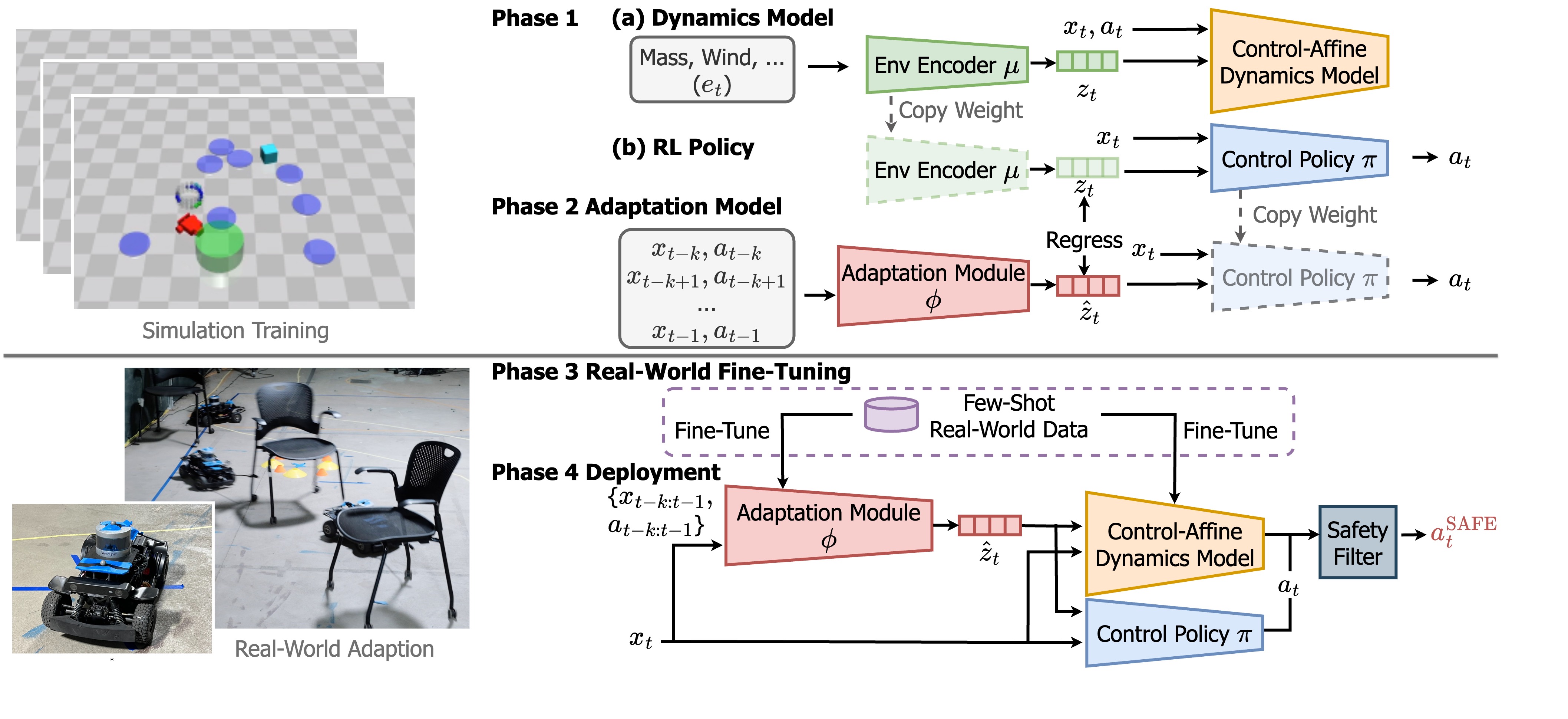}
    \caption{Overview of the four phases of \method. In \textbf{Phase 1 (a)}, the environment encoder $\mu_{\theta_\mu}$ and dynamics model $f_{\theta_f}$, $g_{\theta_g}$ are jointly trained with offline dataset collected by a random policy in simulation. In \textbf{Phase 1 (b)}, we make the parameters of environments encoder $\mu_{\theta_\mu}$ frozen, and the base policy is trained in simulation using model-free RL. In \textbf{Phase 2}, we train the adaption module $\phi_{\theta}(x_{t-k:t-1}, a_{t-k:t-1})$ to fit environments encoder $\mu_{\theta_\mu}$ with the history of state and actions with on-policy data. In \textbf{Phase 3}, we fine-tune our learned dynamics model trained in simulation with few-shot real-world data. In \textbf{Phase 4}, we leverage the learned adaptive dynamics to construct a CBF-based safety filter on top of the adaptive RL policy to ensure safety during real-world deployment. 
    }
    \label{fig:method-framework}
\end{figure*}

\section{RELATED WORK}
\subsection{Safe Reinforcement Learning}
Safe RL methods can be divided into two categories~\cite{zhao2023state}: 1) \textit{end-to-end methods} and 2) \textit{hierarchical methods}. 
One representative of \textit{end-to-end methods} is Lagrangian methods that solve a primal-dual optimization problem to satisfy safety constraints~\cite{liang2018accelerated, tessler2018reward}.
Another branch of \textit{end-to-end methods} is direct policy optimization. Constrained policy optimization (CPO)~\cite{achiam2017constrained} extends a standard trust-region RL algorithm to CMDP, leveraging a novel bound that relates the expected cost of two policies to their average divergence.
\textit{Hierarchical methods} solve the constrained learning problem by adding a safety filter upon the RL policy network, projecting unsafe actions to safe actions. Safety Layer~\cite{dalal2018safelayer} solves a QP problem of cost functions with learned dynamics to safeguard unsafe actions, but it does not provide adaptation ability and safety guarantees. ISSA~\cite{zhao2021model} proposes an efficient sampling algorithm for general control barrier functions to get safe actions and guarantees zero violation during agent learning, but it requires a perfect dynamics model for a one-step prediction. Our method can be categorized in the \textit{hierarchical methods} but with fewer assumptions on safety certificate functions and dynamics.
\subsection{Safe Control}
The safe control community has built a rich body of literature studying how to persistently satisfy a hard safety constraint. The most widely used methods are \textit{energy-based methods} including potential field methods~\cite{khatib1986real}, control barrier functions~\cite{ames2014control}, safe set algorithms~\cite{liu2014control}, and sliding mode algorithms~\cite{gracia2013reactive}. The method most related to our work is control barrier functions. Control barrier functions give a sufficient approach to guarantee forward invariance of the safe set under the respective control law, and can be naturally used as a safety filter upon RL policy networks.

\subsection{Adaptive Control and Policy Adaptation}
Classic adaptive control algorithms often investigate the problem of stability guarantees (e.g., Lyapunov stability) and tracking error convergence (e.g., asymptotic convergence)~\cite{slotine1991applied, aastrom2013adaptive}.
Adaptive control via online system identification is critical to deploying robotics in the real world~\cite{slotine1987adaptive}. However, inferring all the exact physics parameters can be unnecessary and difficult~\cite{kumar2021rma}. 
Recently, there has been increased interest in studying online policy adaptation of reinforcement learning for robotics~\cite{kumar2021rma,lee2020learning,tan2018sim,hwangbo2019learning,peng2020learning}.
These methods train RL controllers in simulation with environmental parameters (e.g., terrains and friction parameters) and then deploy the policy on real-world robots using adaptation techniques. 
There are also many Meta-RL works~\cite{luo2021mesa,nagabandi2018learning,he2022reinforcement} focusing on improving the task-wise adaptation ability.
Among these policy adaptation works, the most relevant work, Rapid Motor adaptation (RMA)~\cite{kumar2021rma}, proposes a teacher-student learning framework
consisting of two components: a base policy and an adaptation module. The combination of these components and the student-teacher training scheme enables the legged robots to adapt to unseen terrains. 
Compared to RMA, our approach has safety guarantees and better bridges the sim-to-real gap (as shown in \Cref{SECTION:EXPERIMENT}) by fine-tuning with few-shot real-world data.
To the best of our knowledge, our work is the first to investigate the policy adaptation of safe RL in dynamics.

\begin{figure*}[t]
    \centering
    \includegraphics[width=0.9\textwidth]{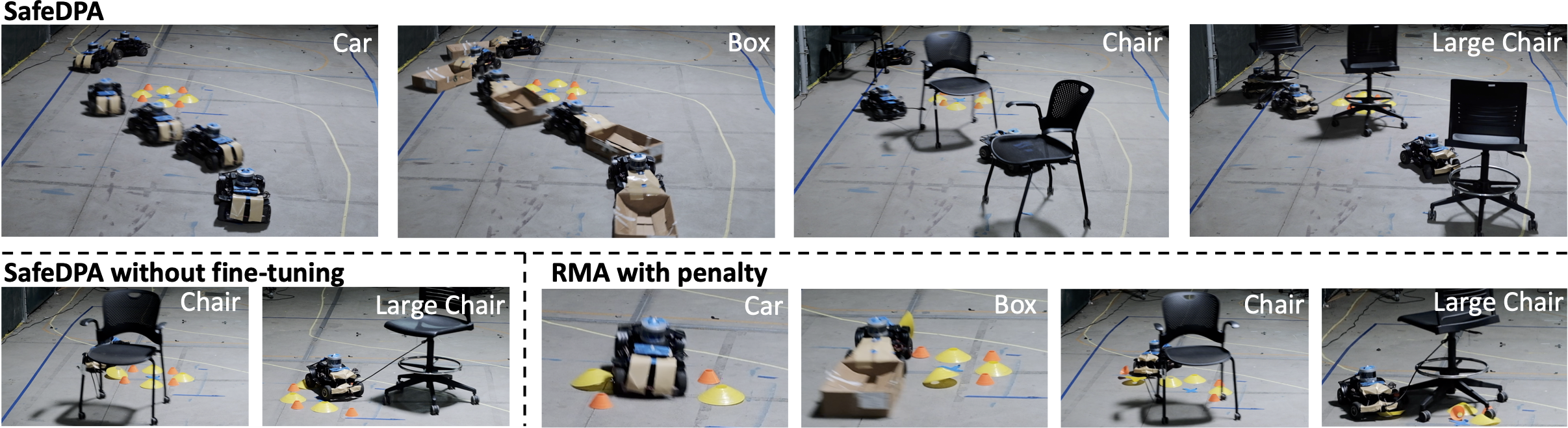} 
    \caption{Comparison of \method, \method without fine-tuning, and RMA with penalty on RC Car platforms. We showcase the successful trajectory of \method in four tasks, alongside instances of an unsafe event (i.e., collision) for \method without fine-tuning and RMA with penalty.
    \method safely achieves the goal, although in training or fine-tuning it never sees the box or chairs.
    This highlights the exceptional generalizability and adaptability of \method. 
    We present video demonstrations in \href{https://sites.google.com/view/safe-deep-policy-adaptation}{https://sites.google.com/view/safe-deep-policy-adaptation}.
    }
    \label{fig:RC-Car-Demo}
\end{figure*}

\section{Problem Statement}
\label{SECTION:Problem-Statement}
In this work, we consider a system with control-affine dynamics, which could generally represent any rigid-body robotic system~\cite{enqvist2005linear}, as:
\begin{equation}
    x_{t+1} = f(x_t,e_t) + g(x_t,e_t)a_t,
\label{eq:dynamics_w_et}
\end{equation}
with time step $t$, state $x_t\in \R^n$, action $a_t\in \R^m$, environment configuration $e_t\in \R^k$, unactuated dynamics $f: \R^m\times\R^k \rightarrow \R^m$, and actuated dynamics $g:\R^m\times\R^k \rightarrow \R^{n \times m}$.
The environment configuration $e_t$ (unknown in real-world deployment) represents physical properties that may change in the system, e.g., friction, payload, disturbance, etc. Different types of configuration $e_t$ may exert distinct influences on the system dynamics. 
For instance, in the case of RC Car (as shown in \Cref{fig:RC-Car-Demo}), dragging a box or a heavy chair can affect both $f$ and $g$. However, certain disturbances only impact $f$, such as an additional wind force. These environment configurations $e_t$ play a crucial role in adaptive control. Changes in wind or dragging can cause drones to crash~\cite{liu2020robust} or force cars to collide with obstacles~\cite{cheng2019end}, resulting in system breakdown, especially for safety-critical systems. 
On the other hand, $e_t$ also has huge impacts on control policy performance. Therefore, our objective is to design a framework that accounts for varying environment configurations, ensuring tasks are executed with a high probability of safety guarantee while achieving high task rewards. We formulate this problem as: $\max_{\pi}  \mathbb{E}_{\tau \sim \pi} \left[\sum_{t=0}^{\infty} \gamma^t r(x_t)\right] \text{s.t. }  x_t \in \mathcal{C}$.
The safe set $\mathcal{C}$ defines a region in the state space that the robot should always stay in. We apply the standard RL~\cite{schulman2017proximal,kumar2021rma,schulman2015trust} setting for solving that, say, $\pi$ is the control policy, $r \in \R$ denotes reward, and $\gamma$ is the discounted factor. 

For the above problem, it is nontrivial to model the dynamics of the robot system conditioned on environment configurations $e_t$, because though it is easy to access the environment configuration $e_t$ in simulation, it is difficult to obtain it during real-world deployments. Moreover, the sim-to-real gap degrading the task performance might be acceptable in certain applications, but this gap might lead to unacceptable catastrophes in safety-critical scenarios. This makes fine-tuning the adaptive dynamics model using few-shot real-world data necessary. In the following section, we will introduce how we tackle these challenges.

\section{Safe Deep Policy Adaptation}
\label{SECTION:METHOD}

In this section, we elaborate on our proposed \method, as outlined in \Cref{fig:method-framework}. \method comprises four phases: Training dynamics model and policy in simulation, training adaptation module in simulation, fine-tuning with few-shot real-world data, and deployment with safe filter, each of which is elaborated upon in the subsequent subsections.

\subsection{Phase 1: Dynamics Model Learning in Simulation} 
Our first step is to train a neural control-affine dynamics model $f_{\theta_f}$ and $g_{\theta_g}$ to approximate system dynamics $f$ and $g$ in simulation. We apply an environment encoder $\mu_{\theta}$ to extract latent representation $z_t$ of environment configuration $e_t$, then pass $z_t$ to dynamics model $f_{\theta_f}$ and $g_{\theta_g}$. We implement $\mu_{\theta_\mu}$, $f_{\theta_f}$, and $g_{\theta_g}$ as multi-layer perceptrons (MLP) and train these models jointly through end-to-end supervised learning. The training loss is defined by:
\begin{equation}
\resizebox{\hsize}{!}{$
\begin{aligned}
\mathcal{L}^{\text{DYN}}_{\theta_f, \theta_g, \theta_\mu} &= \frac{1}{|\mathcal{D}|}\sum\limits_t \|x_{t+1} - \hat{x}_{t+1}\|^2_2 \\
&= \frac{1}{|\mathcal{D}|}\sum\limits_t\| x_{t+1} - f_{\theta_f}(x_t, \mu_{\theta}(e_t)) - g_{\theta_g}(x_t, \mu_{\theta_\mu}(e_t)) a_t\|^2_2,
\end{aligned}
$}
\label{EQ:Dynamics-Loss}
\end{equation}
    where $\mathcal{D}$ is a dataset of transitions $(x_{t}, e_t, a_{t}, x_{t+1})$. We synthesize dataset $\mathcal{D}$ by rolling out a random walking agent in the simulation. We randomize the environment configuration $e_t$ at every step to collect diverse data. 

With the learned encoder $\mu_{\theta_\mu}$, we can train an adaptive RL\footnote{While we utilize an RL agent as the control policy, our proposed \method is controller-agnostic. The RL controller can be substituted with other methods, such as Model Predictive Control (MPC).} policy $\pi_{\theta_\pi}(x_t, \mu_\theta(e_t))$ by integrating model-free RL algorithms (e.g. PPO~\cite{schulman2017proximal}) with the parameter-frozen environment encoder $\mu_{\theta_\mu}$. RL will learn the policy that maximizes expected return $\mathbb{E}_{\tau \sim \pi} \left[\Sigma_{t=0}^{\infty} \gamma^t r(x_t)\right]$ under different environment latent $z_t$. 
In our early empirical experiments, we find that the training scheme in RMA~\cite{kumar2021rma} (i.e., jointly training the environment encoder $\mu_{\theta_\mu}$ with the RL agent) does not apply for dynamics learning since
the latent variable $z_t$ learned from RL loss
does not provide sufficient context for accurate dynamics prediction. 
Therefore, our approach largely improves the dynamics prediction by pertaining the environment encoder only with $\mathcal{L}^{\text{DYN}}_{\theta_f, \theta_g, \theta_\mu}$ to extract a more meaningful representation for dynamics learning.

\subsection{Phase 2: Train Adaptation Module in Simulation}

As mentioned in \Cref{SECTION:Problem-Statement}, the environment configurations $e_t$ are often inaccessible in the real world. Inspired by RMA~\cite{kumar2021rma} and UP-OSI~\cite{yu2017preparing}, we learn an adaptation module $\phi_{\theta}(x_{t-k:t-1}, a_{t-k:t-1})$ that continuously predicts the environment latent $\hat z_t$ for dynamics model and control policy, where $\{x_{t-k:t-1}, a_{t-k:t-1}\}$ is a history of state-action pairs with length $k$. We predict latent $z_t$ rather than environment configurations $e_t$, since it is more generalizable and compact to consider the impact of $e_t$ on dynamics than its actual value~\cite{kumar2021rma}. We model the adaptation module using a 1-D CNN to capture spatiotemporal correlation and train it via an iterative procedure in simulation. For each iteration, we roll out an RL agent in simulation with random episodic $e_t$ to collect dataset $\mathcal{D}^{\text{ADAPT}}$ with tuples $((x_{t-k:t-1}, a_{t-k:t-1}), e_{t-1})$. We then train $\phi_{\theta_\phi}$ via supervised learning with $\mathcal{D}^{\text{ADAPT}}$ to minimize loss $\mathcal{L}^{\text{ADAPT}}_{\theta_\phi}=\frac{1}{|\mathcal{D}^{\text{ADAPT}}|}\sum_t||z_t - \hat z_t||^2_2$, where $z_t = \mu_{\theta_\mu}(e_t)$ and $\hat{z}_t = \phi_{\theta_\phi} (x_{t-k:t-1}, a_{t-k:t-1})$.

\subsection{Phase 3: Real-World Data Fine-Tuning}

Since the safety performance of \method heavily relies on an accurate adaption module and dynamics model (as discussed in \Cref{SECTION:Theory}), the safety can be largely affected by the sim-to-real gap. 
Leveraging the adaptive structure of the dynamics model in \method, we are able to align simulation with the real world by fine-tuning the dynamics model and adaptation module with few-shot (compared with the training set) real-world data. We propose to fine-tune models via end-to-end supervised learning, which minimize
$\mathcal{L}^\text{TUNE}_{\theta_f, \theta_g, \theta_\phi}=\frac{1}{|D_\text{real}|}\sum_{i}||\hat x_{t+1} - x_{t+1}||_2^2,$   
where $\hat x_{t+1}=f_{\theta_f}(x_t,\hat z_t)+g_{\theta_g}(x_t,\hat z_t)a_t$, $\hat z_t = \phi_{\theta_\phi}(x_{t-1:t-k},a_{t-1:t-k})$. Since the dynamics model is independent of control policy, the dataset $D_\text{real}$ can be collected from various sources, such as teleoperation, random walks, etc. 
The fine-tuned neural networks $f_{\theta_f}, g_{\theta_g}, \phi_{\theta_\phi}$ have shown empirical success and necessity in real-world applications, as evidenced in \Cref{SECTION:EXPERIMENT-RC-CAR}.
Remarkably, \method displays strong generalizability to adapt to new disturbances like \textbf{Chairs} and \textbf{Box} (refer to \Cref{fig:RC-Car-Demo}), even though it has never seen these objects in both simulation and fine-tuning. 

\subsection{Phase 4: Safe Filter and Real-World Deployment}

To achieve safe real-world deployment, we combine the adaptation module and dynamics model and leverage the control barrier function (CBF)~\cite{ames2014control} to construct a safe filter, which can guarantee the robot always stays in the safe set $\mathcal{C}$ (i.e., forward invariance). We first construct a CBF $h:\R^n\rightarrow \R$, which satisfies: %
\begin{equation}
    \mathcal{C}^h :=\{x\in \R^n: h(x) \geq 0\} \subseteq \mathcal{C},\forall x \in \partial\mathcal{C}^h, \frac{\partial h}{\partial x} \neq 0,
\end{equation}
\begin{equation}
    \forall x\in\partial \mathcal{C}^h, h(x_{k+1})\geq (1-\eta) h(x_k), 0 < \eta \leq 1,
    \label{EQUATION:DTCBF}
\end{equation}
where $\mathcal{C}^h$ is a subset of the safe set $\mathcal{C}$, and $\eta$ is a scalar representing the conservativeness of CBF. If \Cref{EQUATION:DTCBF} is satisfied for all $x_t \in \mathcal{C}^h$, then $\mathcal{C}^h$ and the safe set $\mathcal{C}$ are rendered forward invariant~\cite{agrawal2017discrete, ames2016control}. We adopt affine barrier functions as \cite{cheng2019end}: $h=p^Tx+q$, ($p\in\mathbb{R}^n,q\in\mathbb{R}$), for the simplicity of computation. We then formulate an optimization problem (CBF-QP) satisfying \Cref{EQUATION:DTCBF} that can be solved at each time step:
\begin{equation*}
\resizebox{0.45\textwidth}{!}{$
\begin{array}{cl}
a^{\text{SAFE}}_t =\underset{a_t}{\operatorname{argmin}} & \left\|a_t - \pi_{\theta_\pi}(x_t, \hat z_t)\right\|_2 \\
\text { s.t. } & p^T f_{\theta_f}\left(x_t,\hat z_t\right)+p^T g_{\theta_g}\left(x_t,\hat z_t\right) a_t+p^T q \\
& \geq (1-\eta) h\left(x_t\right)+\epsilon \\
& a_t \in \mathcal{A},
\end{array}
$}
\label{eq:cbf-qp}
\end{equation*}
where $\mathcal{A}$ is feasible action set, $\hat{z}_t = \phi_{\theta_\phi} (x_{t-k:t-1}, a_{t-k:t-1})$, and $\epsilon$ is a variable of robust margin for safety which will be quantified in the following theory part (\Cref{SECTION:Theory}). This QP seeks to modify RL policy to meet safety constraints with minimal interference. \method takes the output of safety filter $a^{\text{SAFE}}_t$ to execute in the real world. 
The theoretical analysis on the safety of $a^{\text{SAFE}}_t$ is presented in \Cref{SECTION:Theory}.

\section{Theoretical Analysis}
\label{SECTION:Theory}
In this section, we begin to present the safety guarantee of \method by introducing an assumption on the dynamics prediction error.
\begin{asm}[Bounded Prediction Error]
$\forall x_t, a_t, e_t$, $\| f(x_t, e_t) - f_{\theta_f}(x_t, e_t) \|_1 < \epsilon_f$, $\| g(x_t, e_t) - g_{\theta_g}(x_t, e_t) \|_1 < \epsilon_g$, $\|\hat{z_t} - z_t\|_1 = ||\phi_{\theta_\phi}(x_{t-k:t-1},a_{t-k:t-1})-\mu(e_t)|| < \epsilon_z$.
\label{asm:error}
\end{asm}
We further introduce another assumption regarding the Lipschitz continuity of the (neural) dynamics. These assumptions are necessary because they provide critical quantifications into the properties of dynamics and neural networks. Otherwise, we would lack the tools to rigorously analyze the reliability of these networks~\cite{berkenkamp2017safembrl}.
\begin{asm}[Lipschitz Continuity]
The dynamics $f( \cdot )$, $g(\cdot)$ in \Cref{eq:dynamics_w_et} are $L_f$, $L_g$ Lipschitz continuous with respect to the 1-norm. The learned dynamics $f_{\theta_f}( \cdot )$, $g_{\theta_g}(\cdot)$ in \Cref{EQ:Dynamics-Loss} are $L_{f_{\theta_f}}$, $L_{g_{\theta_g}}$ Lipschitz continuous with respect to the 1-norm. %
\label{asm:continuity}
\end{asm}
\begin{theorem}[Safe Control]
Under \Cref{asm:error,asm:continuity}, then solving the safety condition $p^T f\left(x_t\right)+p^T g\left(x_t\right) a_t+p^T q \geq (1-\eta) h\left(x_t\right)-\epsilon $ in \Cref{eq:cbf-qp} will guarantee the forward invariance of the safe set $\mathcal{C}$ (i.e., $x_{t+n} \in \mathcal{C}, \forall n=1,2,3\cdots$) if $\epsilon > \epsilon_f p^T \mathbf{1} + \epsilon_g \|a\|_{1}^{max} p^T \mathbf{1} + \epsilon_z (L_f + L_{f_{\theta_f}}) + \epsilon_z \|a\|_{1}^{max} (L_g + L_{g_{\theta_g}})$ where we denote $\|a\|_{1}^{max} = \max_{\forall a_t \in \mathcal{A}} \|a_t\|_1$.
\end{theorem}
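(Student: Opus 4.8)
The plan is to prove the statement in two stages: first reduce the forward-invariance claim to a per-time-step condition on the \emph{true} dynamics, and then show that any action $a_t$ feasible for the CBF-QP --- which is built from the learned models $f_{\theta_f}, g_{\theta_g}$ and the predicted latent $\hat z_t$ --- forces that true condition to hold once the robust margin $\epsilon$ dominates the model-mismatch error. For the reduction, recall that with the affine barrier $h(x)=p^{T}x+q$ and the true dynamics \eqref{eq:dynamics_w_et}, the one-step discrete-time CBF inequality $h(x_{t+1})\ge(1-\eta)h(x_t)$ is exactly $p^{T} f(x_t,e_t)+p^{T} g(x_t,e_t)a_t+q\ge(1-\eta)h(x_t)$. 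By the discrete-time CBF result invoked after \eqref{EQUATION:DTCBF} (\cite{agrawal2017discrete, ames2016control}), if this inequality holds at every step and the current state lies in $\mathcal C^h$, then $h(x_{t+n})\ge 0$ for all $n$ by a one-line induction (since $1-\eta\ge 0$ and $h(x_t)\ge 0$ give $h(x_{t+1})\ge 0$), hence $x_{t+n}\in\mathcal C^h\subseteq\mathcal C$. So it suffices to certify the true one-step condition at each $t$.

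For the transfer step, I subtract the learned-model left-hand side $p^{T} f_{\theta_f}(x_t,\hat z_t)+p^{T} g_{\theta_g}(x_t,\hat z_t)a_t$, which the CBF-QP controls with margin $\epsilon$, from the true one, and bound the gap
\[
E_t := \bigl|p^{T}\!\bigl(f(x_t,e_t)-f_{\theta_f}(x_t,\hat z_t)\bigr)+p^{T}\!\bigl(g(x_t,e_t)-g_{\theta_g}(x_t,\hat z_t)\bigr)a_t\bigr|.
\]
Feasibility of the QP together with $E_t\le\epsilon$ then preserves the inequality for the true dynamics; the entire content of the theorem is that the stated threshold is a valid upper bound for $E_t$, uniform in $t$ and in the feasible action.

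To bound $E_t$, I apply the triangle inequality to split it into an $f$-contribution and a $g$-contribution, and in each I insert the learned model evaluated at the \emph{true} latent $z_t=\mu(e_t)$ as an intermediate term. For $f$ this produces $f(x_t,e_t)-f_{\theta_f}(x_t,z_t)$, bounded by $\epsilon_f$ via \Cref{asm:error}, plus $f_{\theta_f}(x_t,z_t)-f_{\theta_f}(x_t,\hat z_t)$, bounded by $L_{f_{\theta_f}}\|z_t-\hat z_t\|_1\le L_{f_{\theta_f}}\epsilon_z$ via \Cref{asm:continuity}; accounting for the sensitivity of the true dynamics to the latent contributes the remaining $L_f\epsilon_z$, yielding $\epsilon_f+(L_f+L_{f_{\theta_f}})\epsilon_z$. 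The identical decomposition for $g$ gives $\epsilon_g+(L_g+L_{g_{\theta_g}})\epsilon_z$, which multiplies $a_t$. Finally I use a H\"older-type estimate $|p^{T} v|\le(p^{T}\mathbf{1})\|v\|_\infty$ (taking $p\ge 0$ elementwise) to introduce the $p^{T}\mathbf{1}$ factors, together with $\|a_t\|_1\le\|a\|_1^{max}$ to bound the action; collecting the four resulting terms reproduces precisely $\epsilon_f p^{T}\mathbf{1}+\epsilon_g\|a\|_1^{max}p^{T}\mathbf{1}+\epsilon_z(L_f+L_{f_{\theta_f}})+\epsilon_z\|a\|_1^{max}(L_g+L_{g_{\theta_g}})$. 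Requiring $\epsilon$ to exceed this makes $E_t\le\epsilon$, which closes the transfer step.

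I expect the error-propagation bound to be the crux. Two subtleties need care: cleanly separating the \emph{prediction} error (model mismatch at a matched latent, controlled by $\epsilon_f,\epsilon_g$) from the \emph{adaptation} error (feeding $\hat z_t$ in place of $z_t=\mu(e_t)$, controlled by $\epsilon_z$ through the Lipschitz constants of \Cref{asm:continuity}), and tracking how $p$ and the action magnitude enter each term so as to land on the exact threshold. Once $E_t\le\epsilon$ is established uniformly, the forward-invariance conclusion is the short induction of the first paragraph, so the analytic weight of the proof sits entirely in this bound.
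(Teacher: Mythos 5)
Your proposal is correct and takes essentially the same approach as the paper's proof: decompose the error in the predicted barrier value into a dynamics-approximation part at matched latent (giving $\epsilon_f p^T\mathbf{1} + \epsilon_g\|a\|_1^{max}p^T\mathbf{1}$) and an adaptation part propagated through the Lipschitz constants (giving the $\epsilon_z$ terms), require $\epsilon$ to dominate the sum so that the QP constraint on the learned model implies $h(x_{t+1})\ge(1-\eta)h(x_t)$ for the true dynamics, and conclude forward invariance by the standard discrete-time CBF induction. If anything, your write-up is more explicit than the paper's (which simply asserts the bounds $\Delta_1$ and $\Delta_2$ without the triangle-inequality and H\"older details), and the one step you fudge --- attributing the extra $L_f\epsilon_z$ summand to ``sensitivity of the true dynamics to the latent,'' with no $p^T\mathbf{1}$ factor on the $\epsilon_z$ terms --- is fudged in exactly the same way in the paper itself.
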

\begin{proof}
We consider the worst-case prediction of the $h(\hat{x}_{t+1})$, which is $\min_{x_t, a_t} h(\hat{x}_{t+1})$ where $\hat{x}_{t+1} = f_{\theta_f} (x_t, \hat{z_t}) + g_{\theta_g} (x_t, \hat{z_t})a_t$.
The prediction error of $h(\hat{x}_{t+1})$ comes from the prediction error of $\hat{x}_{t+1}$, which can be further decoupled into two parts: 1) the approximation error of $f_{\theta_f}$ and $g_{\theta_g}$; 2) the approximation error of $\phi_{\theta_\phi}$. 
As for 1) the approximation error of $f_{\theta_f}$ and $g_{\theta_g}$, supposing we have perfect regression of $z_t$ (i.e., $\hat{z_t} = z_t$), denote the maximum error of $h(\hat{x}_{t+1})$ under this situation as $\Delta_1 = h(x_{t+1}) - h(f_{\theta_f} (x_t, \hat{z_t}) + g_{\theta_g} (x_t, \hat{z_t})a_t) <   \epsilon_f p^T \mathbf{1} + \epsilon_g \|a\|_{1}^{max} p^T \mathbf{1}$. 
As for 2) the approximation error of $\phi_{\theta_\phi}$, denote the maximum error of $h(\hat{x}_{t+1})$ under this situation as $\Delta_2 = \|\hat{z_t} - z_t\|^{max}_1 (L_f + L_{f_{\theta_f}}) + \|\hat{z_t} - z_t\|^{max}_1 \|a\|^{max}_{1} (L_g + L_{g_{\theta_g}}) < \epsilon_z (L_f + L_{f_{\theta_f}}) + \epsilon_z \|a\|_{1}^{max} (L_g + L_{g_{\theta_g}})$.
Combining $\Delta_1$ and $\Delta_2$, by setting $\epsilon > \Delta_1 + \Delta_2$ and solving $h(\hat{x}_{t+1}) \geq (1-\eta)h(x_t) - \epsilon$, we have $h(x_{t+1}) \geq (1-\eta)h(x_t)$. The forward invariance of the safe set $\mathcal{C}$ is a natural proof following ~\cite{ames2019control}.
\end{proof}

Note that the value of $\epsilon$ depends on two types of error: 1) dynamics learning error (i.e., $\epsilon_f$ and $\epsilon_g$); and 2) adaptation regression error (i.e., $\epsilon_z$).
As these two errors decrease (e.g., with better prediction models or fine-tuning with real-world data), $\epsilon$ will decrease as well.
In practice, one can statistically estimate $\epsilon$ using fine-tuning data, but quantifying exact error bounds is a challenging machine learning problem~\cite{berkenkamp2017safembrl,liu2020robust}.

\begin{figure}[htbp]
    \centering
    \includegraphics[width=0.8\columnwidth]{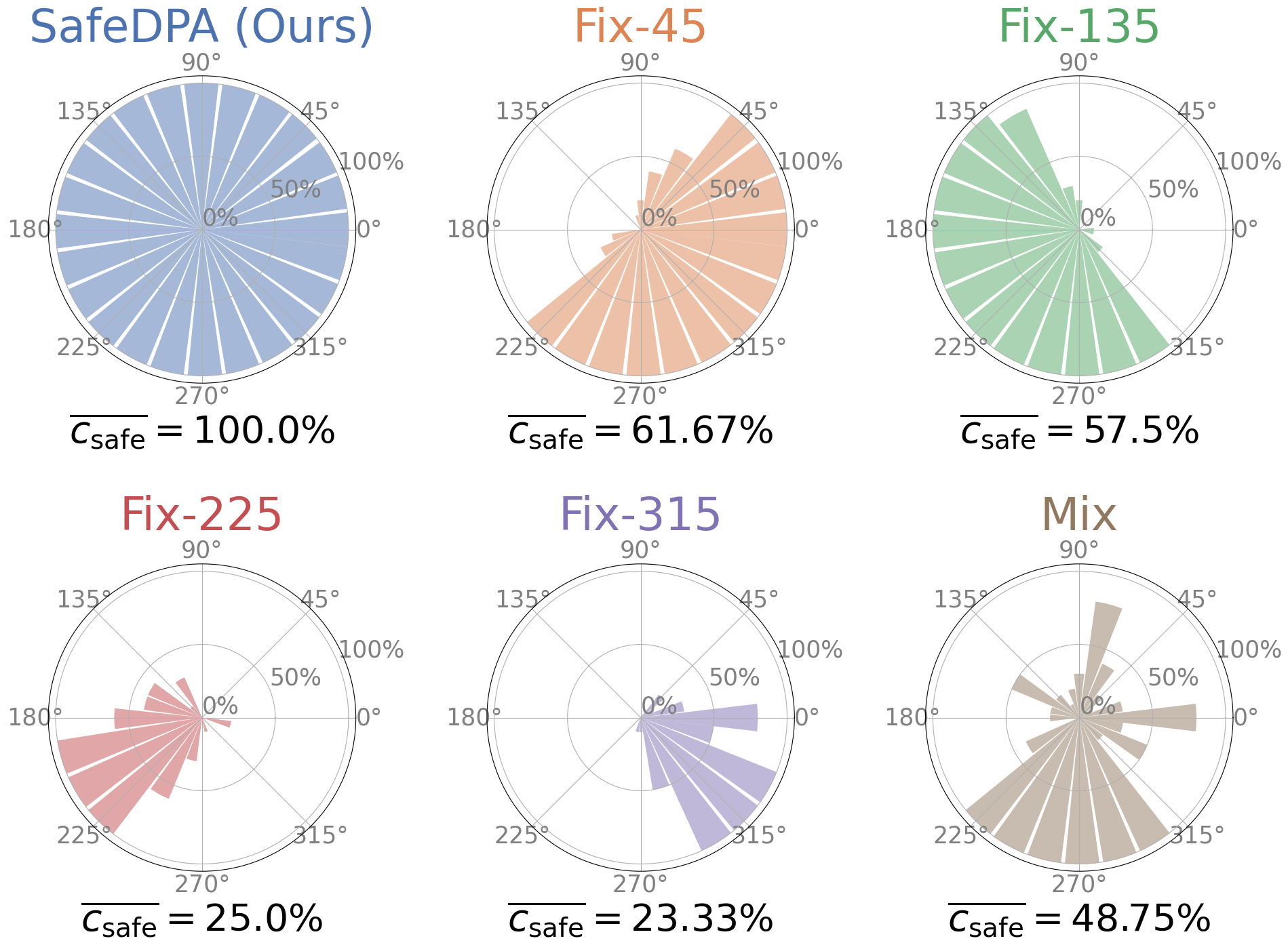}
    \caption{The area of the colored region represents the safety rate. In the inverted pendulum task, \method consistently achieves the highest safety rate with zero violations across all directions, surpassing \Fix and Mix where these baselines only maintain high safety rates in specific directions.}
    \label{FIG:Result-Pend-Safe}
\end{figure}

\section{EXPERIMENTS}
\label{SECTION:EXPERIMENT}

In this section, we demonstrate extensive experimental results in Inverted Pendulum, Safety Gym~\cite{Ray2019}, and Real-World RC Car, to address the following questions: 
\begin{itemize}
\item \textbf{Q1}: Can \method guarantee safety in diverse tasks and various environment configurations?
\item \textbf{Q2}: Can \method strike the perfect balance between safety and task performance? How does \method compare with state-of-the-art safe RL methods?
\item \textbf{Q3}: How well does \method generalize to unseen scenarios in the real-world? How does the fine-tuning phase in \method boost performance?
\end{itemize}

\begin{figure}[htbp]
    \centering
    \includegraphics[width=0.9\columnwidth]{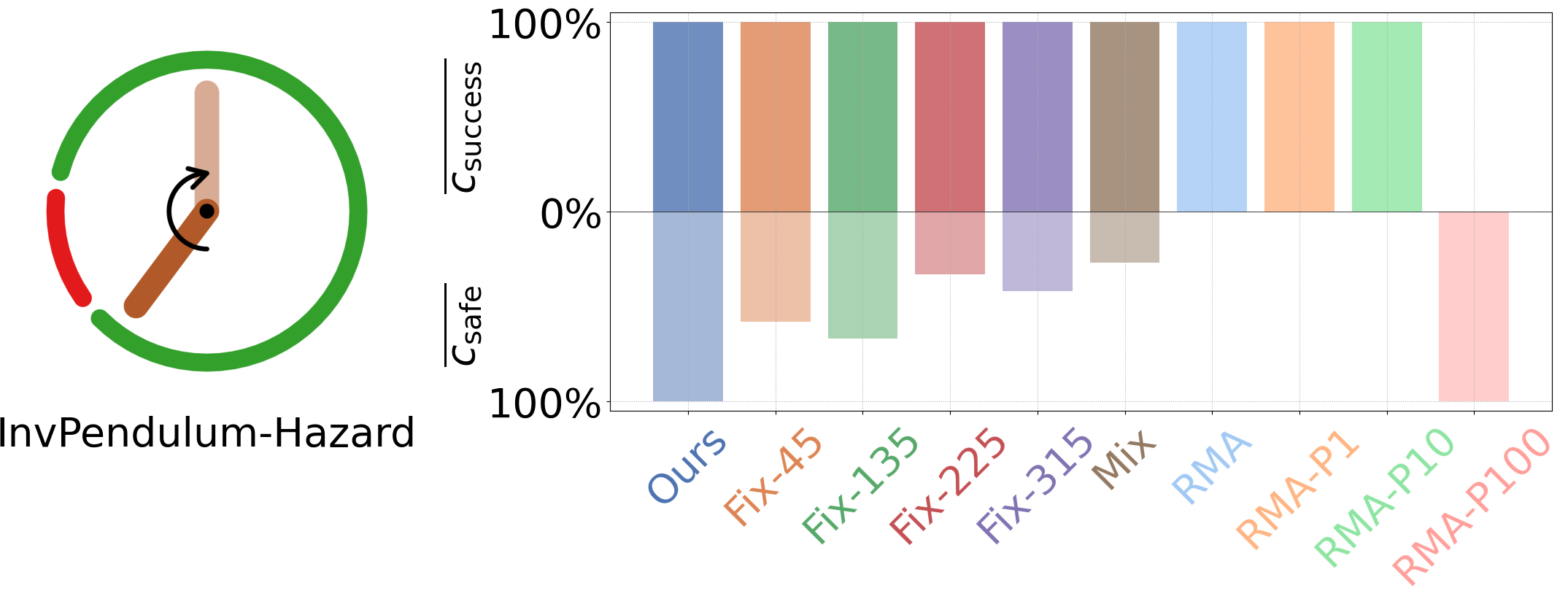}
    \caption{On the left, we show the InvPendulum-hazard environment. On the right, we demonstrate the success rate and safety rate of \method and baselines where \method is the only algorithm that achieves both 100\% for success rate and safety rate. 
    }
    \label{FIG:Result-Pend-hazard}
\end{figure}

\begin{figure*}[htbp]
    \centering
    \includegraphics[width=0.9\textwidth]{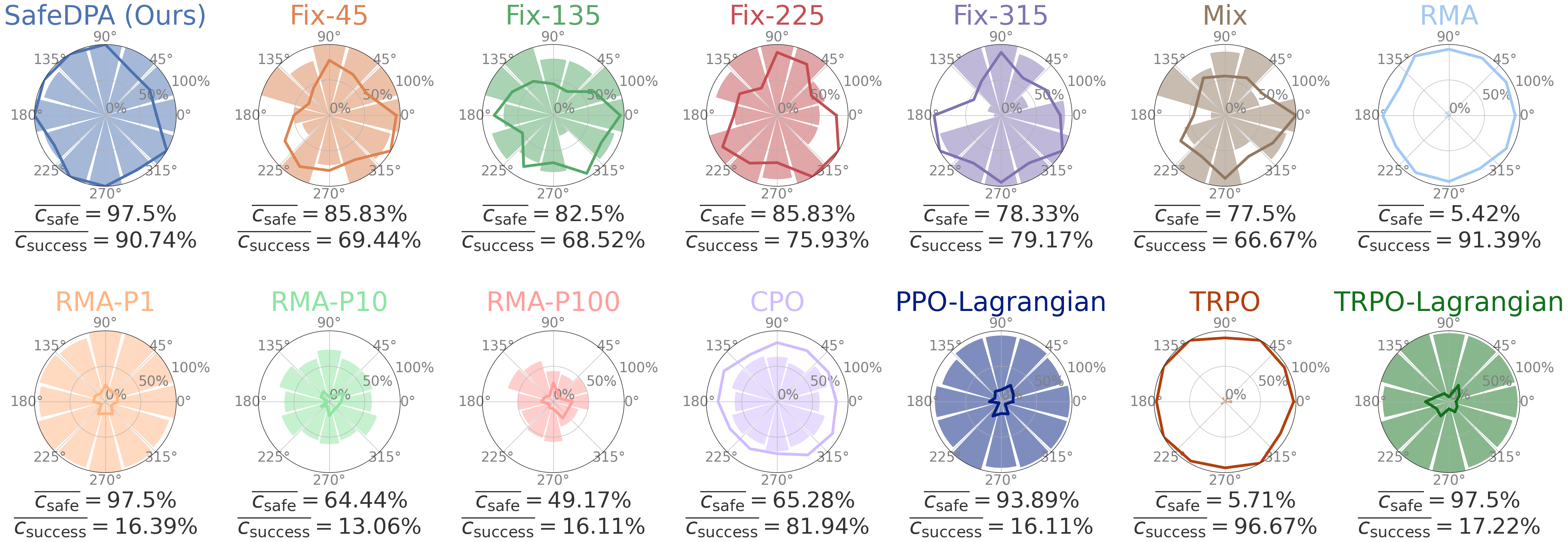} 
    \vspace{-5pt}
    \caption{The radius of the circles indicates task performance (success rate), while the size of the colored area represents the safety rate. \method stands out with the highest safety rate $97.5\%$. Though PPO and TRPO have marginally higher success rates, they frequently violate safety constraints.}
    \label{fig:EXP-SAFETY-GYM}
\end{figure*}

\subsection{Inverted Pendulum}
\label{SECTION:EXPERIMENT-PEND}
We adopt an inverted pendulum simulation augmented with external wind perturbations from~\cite{shi2021meta}. 
Wind (with fixed magnitude, and variable angle from 0 to 360 degrees) is introduced as the variable environment configuration $e_t$. In this setting, we aim to evaluate the adaptive safety of \method, and its dual capability in safety and task performance.

\subsubsection{Address \textbf{Q1}}
First, we consider the \textit{pendulum balancing} task, where the pendulum aims to sustain upright (angle $\theta=0$). Training the RL agent with the reward function \( r=-0.1\theta^2-0.1\dot\theta^2-0.0001u^2 \), where \( u \) is the torque, inherently aligns it to remain upright, fulfilling the safety criteria $|\theta|<45^\circ$. 
We compare \method with two baselines: $\text{Fix-}\alpha$ and Mix. For $\text{Fix-}\alpha$, dynamics models are trained in an environment with wind consistently directed as per $\alpha$, which can be one of $\{45,135,225,315\}$. Conversely, the Mix model is trained in various wind conditions without any adaptation (which can be viewed as a domain randomization method). 
To conduct a fair comparison of safety, we employ a control policy generating random actions for our adaptive safe module and other baseline methods.
We let \method, \Fix, and Mix share the same safety filter structure, and test them across environments with wind directions spanning from $0^{\circ}$ to $360^{\circ}$. The result is demonstrated in \Cref{FIG:Result-Pend-Safe}, in which the size of the colored area denotes the safety rate in different directions. 
Note that \Fix and Mix can only guarantee safety under specific wind directions, while \method achieves zero violation in all wind directions, outperforming baselines significantly.

\subsubsection{Address \textbf{Q2}}
To evaluate both task performance and safety simultaneously, we introduce the \textit{InvPendulum-Hazard} task, as shown in \Cref{FIG:Result-Pend-hazard}. In this setup, the RL objective is to maintain the pendulum in an upright position. Distinct from this goal, there is a safety constraint to avoid a hazard (represented by the red region shown in the pendulum diagram, as shown in \Cref{FIG:Result-Pend-hazard}). Notably, the pendulum's starting position is just beneath this hazard. To maintain an upright position without entering the hazard, the control policy should direct the pendulum to rotate counter-clockwise, through a longer route instead of the shorter clockwise path.
In addition to \Fix and Mix, our comparison includes RMA~\cite{kumar2021rma} and \RMAP. The \RMAP is an invariant of RMA that is trained with an extra reward penalty quantified by $\beta$, for any safety violation during training. 

As shown in \Cref{FIG:Result-Pend-hazard}, RMA consistently lacks safety guarantees, and the \RMAP methods are either too aggressive or too conservative. Meanwhile, both \Fix and Mix exhibit compromised safety across diverse $e_t$ environments. In contrast, \method consistently achieves 100\% success rate with 100\% safety rate, demonstrating its capability to ensure safety while achieving high performance across varying environment configurations.

\begin{table}[h]
\caption{The success rate and safety rate of \method, \method without fine-tuning and RMA with penalty on RC Car.}
\centering
\begin{tabular}{l|ccc|ccc}
\toprule
\multirow{2}{*}{Task} & \multicolumn{3}{c|}{Success Rate} & \multicolumn{3}{c}{Safety Rate} \\
 & Box & Chair & Car &  Box & Chair & Car \\
\midrule
\method    &     1.00 &   0.80 &        1.00 &  \textbf{0.90} &  \textbf{1.00} &   \textbf{0.75} \\
\method w/o Tuning &     0.80 &   1.00 &        1.00 &  0.70 &  0.45 &    0.34 \\
RMA-P    &     1.00 &   1.00 &        1.00 &  0.50 &  0.25 &       0.25 \\
\bottomrule
\end{tabular}
\label{TABLE:Real-Car}
\end{table}

\subsection{Safety Gym Simulation Experiments}
\label{SECTION:EXPERIMENT-SAFETY-GYM}

To answer \textbf{Q2}, we implement \method on the Safety Gym~\cite{Ray2019} benchmark. The \textit{point} robot in Safety Gym has higher state dimensions $x_t \in \R^{17}$, compared to pendulum's $x_t \in \R^2$. We introduce an external force $F_t \in \R^2$ (on the 2D plane) within the MuJoCo~\cite{todorov2012mujoco}, and define $F_t$ as environment configuration. The control task is to navigate the \textit{point} robot to a target and avoid hazards. For this setting, we also incorporate standard RL methods (PPO~\cite{schulman2017proximal} and TRPO~\cite{schulman2015trust}) and end-to-end safe RL methods including CPO~\cite{achiam2017constrained}, PPO-Lagrangian, and TRPO-Lagrangian~\cite{tessler2018reward} as benchmarks. Similar to experiments in \Cref{SECTION:EXPERIMENT-PEND}, we evaluate the algorithms across various environments with the direction of $F_t$ ranging from $0$ to $360$ degrees. The results are demonstrated in \Cref{fig:EXP-SAFETY-GYM}. Remarkably, our proposed \method is the sole algorithm that maintains both high performance and safety across all $F_t$ values. In contrast, other baseline algorithms tend to excel in either performance or safety, but not in both aspects simultaneously. 

To further illustrate that \method is able to give precise dynamics prediction which leads to more precise CBF value prediction, we present a visualization of the CBF landscape in \Cref{fig:HEATMAP}. 
We visualize the difference of CBF values $\Delta h(x_t, a_t) = h(f_{\theta_f} (x_t, \hat{z_t}) + g_{\theta_g} (x_t, \hat{z_t})a_t) - h(x_t)$ of different methods (Fix-45, Fix-135, Fix-225, Fix-315, Mix, and \method) with external force in the $135^{\circ}$ direction, and use the visualization of Fix-135 as the ground truth for reference because Fix-135 is trained and tested in the same external force.
As shown in \Cref{fig:HEATMAP}, the heatmap of our \method closely aligns with Fix-135 while other methods completely mismatch with Fix-135. This result proves the successful adaptation of dynamics prediction and CBF prediction of \method which further leads to successful adaptation of safe control. 
For instance, in the scenario depicted in \Cref{fig:HEATMAP}, a light color signifies safety, while a dark color signifies a lack of safety. When the throttle is set to 1, turning the car to the left results in a safer maneuver compared to turning it to the right, as demonstrated in \Cref{fig:HEATMAP} (left). Among the evaluated methods, only \method and Fix-135 provide accurate predictions in this scenario, leading to safer and less conservative actions. Incorrect estimations could result in detours or collisions with the obstacle (blue circle).

\begin{figure}[htbp]
    \centering
    \includegraphics[width=\columnwidth]{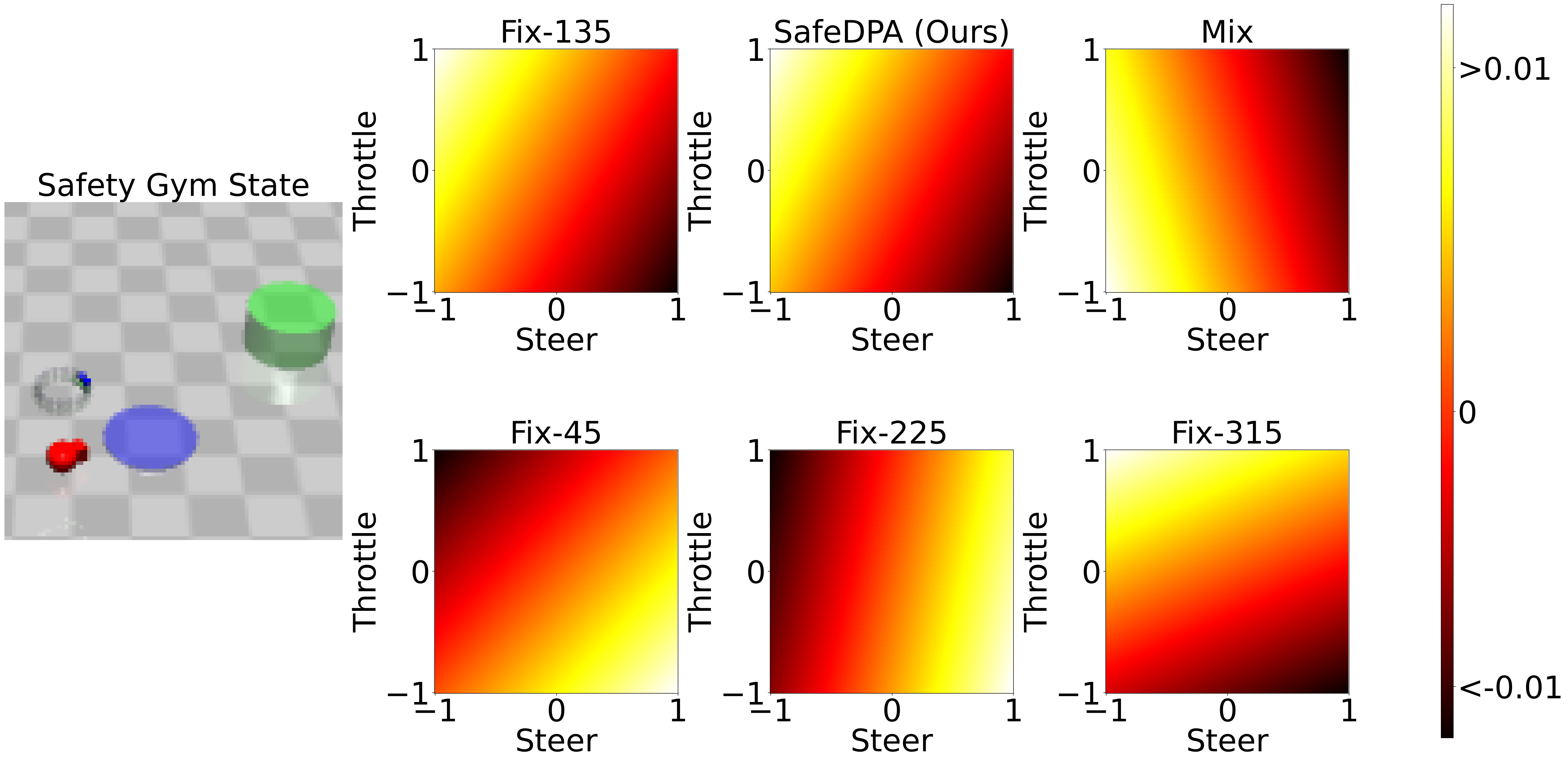} 
    \caption{Heat maps of the difference of CBF values $\Delta h(x_t, a_t) = h(f_{\theta_f} (x_t, \hat{z_t}) + g_{\theta_g} (x_t, \hat{z_t})a_t) - h(x_t)$ with external force in the $135^{\circ}$ direction. The x-axis represents the control space of turning actuator and the y-axis represents the control space of moving actuator. The tested environment is applied with external force in the $135^{\circ}$ direction, the alignment of heat maps between Fix-135 and \method proves the great adaption ability of \method.}
    \label{fig:HEATMAP}
\end{figure}

\subsection{Real-World Deployment}
\label{SECTION:EXPERIMENT-RC-CAR}
After successfully addressing \textbf{Q1} and \textbf{Q2} in simulations, we take a step further into the complexities of real-world experiments. In this section, we delve into evaluations of \method on an RC Car, aiming to tackle \textbf{Q3}.

\subsubsection{Experimental Setup}
We equip a 1/10 scale racing car with a Jetson TX2 to execute an on-board control policy. This setup ensures our controller runs efficiently at a frequency of 20Hz. To achieve precise control, we employ the Vicon motion capture system, which continuously provides information about the car's location. With this real-time data, the control objective for the car is to navigate towards a set goal, avoiding any collisions with a centrally positioned obstacle.

\subsubsection{Training in Simulation}
We start by constructing a simulator with a bicycle model. The bicycle model is a fusion of a kinematic model and a PID controller. We measure the car's physical properties to align the simulation. Within this simulation, we also simulate a random drag ($F_t\in \R^2$), which serves as an environment configuration $e_t$. The control input of the car is target velocity and steer. The state of the car consists of position, yaw, and real velocity. 
As for baselines, we train RMA with a reward penalty for collision (denoted as RMA-P). We also include \method without fine-tuning as baselines. Both \method and \method without fine-tuning utilize $2 \times 10^7$ state-action pairs in model learning.

\subsubsection{Real-world Fine-Tuning}
We use hard-coded commands to make the car move in circles on the unobstructed ground without the drag disturbance, and collect a small-scale dataset with data size merely amounting to $0.1\%$ of simulated data. 
We then use this dataset to fine-tune the adaption module and dynamics model of \method. 

\subsubsection{Adaption Deployment and Configurations}
The adaptation module is deployed synchronously, given that computations could be completed within the 0.05$s$ window. For real-world adaptation demonstrations, we set the obstacle in the same location as in the simulation, and test four distinct environment configurations: 1) \textbf{Car}: Standard RC car without dragging; 2) \textbf{Box}: Car attaching a paperboard box; 3) \textbf{Chair}: Car attaching a movable chair; 4) \textbf{Large Chair}: Car attaching a heavier chair. It's worth noting that configurations \textbf{Box, (Large) Chair} are totally unseen before deployment. These configurations test the model's zero-shot adaptation capabilities since these specific configurations were neither modeled in the simulation nor used during the fine-tuning phase. 

\subsubsection{Results}
The results are summarized in \Cref{TABLE:Real-Car} and \Cref{fig:RC-Car-Demo}. Even though \method has never seen chairs or boxes in simulation or fine-tuning, it still achieves a high safety rate. \method $3\times$ outperforms RMA-P on average in terms of safety. Even though RMA has good performance, it frequently violates safety constraints, which can be catastrophic in safety-critical systems. 
To further demonstrate the efficacy of the fine-tuning procedure (\textbf{Phase 3}), we apply both the dynamics model with fine-tuning (denoted as \textbf{Tuning}) and the dynamics model without fine-tuning (denoted as \textbf{w/o Tuning}) to predict trajectories of the car with the first 10 steps of a collected real-world trajectory. In \Cref{fig:ABLATION-trajectory}, the trajectory predicted by \textbf{Tuning} more closely resembles the real trajectory, whereas the trajectory predicted by \textbf{w/o Tuning} overestimates the moving distance. This overestimation may be caused by inaccurate estimations of physical parameters such as friction and inertia. \textbf{Tuning} mitigates this problem during fine-tuning. We also evaluate the empirical error of dynamics models on a dataset of real-world trajectories, which is mixed with \textbf{Car}, \textbf{Box}, and \textbf{Chair} data. The results in \Cref{fig:ABLATION-empirical-error} show that \textbf{Tuning} is approximately $10\times$ more accurate than \textbf{w/o Tuning} in estimating the future states.
Therefore, we could clearly answer \textbf{Q3} by the fact that fine-tuning with real-world data boosts the \method's generalizability and performance.

\begin{figure}[htp]
\centering
  \begin{subfigure}{.5\columnwidth}
    \includegraphics[width=\columnwidth]{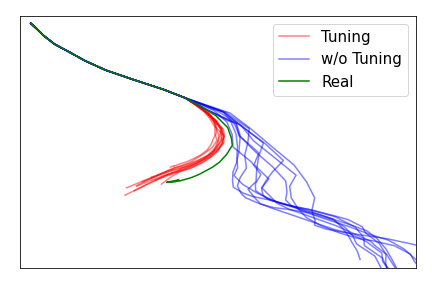}
    \caption{Predictive Trajectory}
    \label{fig:ABLATION-trajectory}
  \end{subfigure}%
  \begin{subfigure}{.5\columnwidth}
    \includegraphics[width=\columnwidth]{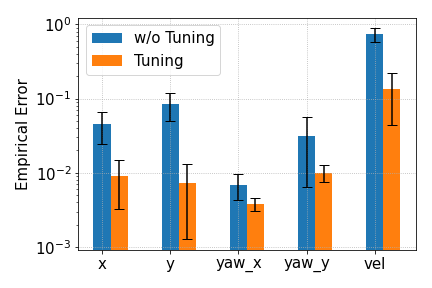}
    \caption{Empirical Error}
    \label{fig:ABLATION-empirical-error}
  \end{subfigure}%
\caption{In (a), we visualize the real-world trajectory (\textbf{Real} marked in green), the trajectory of the dynamics model with fine-tuning (\textbf{Tuning} marked in red), and the trajectory of the dynamics model without fine-tuning (\textbf{w/o Tuning} marked in blue). The predictive trajectories (\textbf{Tuning} and \textbf{w/o Tuning}) are generated by dynamics models after reading the first 10 steps and taking the same actions as the real trajectory. The fine-tuned dynamics model generates a trajectory that more closely resembles the real trajectory. In (b), we demonstrate that the tuned dynamics model is nearly $10\times$ more precise than the dynamics model without fine-tuning.}
\end{figure}

\section{Conclusion and Future Works}
In this paper, we propose \method, a novel framework that tackles the problem of policy adaptation while ensuring safety.
In the future, there are two interesting research directions. One is building closed-loop adaptation methods that directly adapt the policy network based on its performance and safety. The other is to generalize \method~to vision-based control systems and non-control-affine systems.

\section*{ACKNOWLEDGMENT}
We gratefully acknowledge the assistance of Dvij Kalaria in setting up the hardware.

\printbibliography

\end{document}